\documentclass[11pt]{article}
\usepackage{inputenc}
\usepackage[english]{babel}
\usepackage{color}
\usepackage{algorithm}
\usepackage{tikz-cd}
\usepackage{algorithmic}
\usepackage{hyperref}
\usepackage[letterpaper,top=2cm,bottom=2cm,left=3cm,right=3cm,marginparwidth=1.75cm]{geometry}
\usepackage{mathrsfs}
\usepackage{amsmath}
\usepackage{amsfonts}
\usepackage{amsthm}
\usepackage{tikz}

\newcommand{\calP}{\mathcal{P}}

\newcommand{\calE}{\mathcal{E}}
\newcommand{\calV}{\mathcal{V}}

\newcommand{\calX}{\mathcal{X}}

\newcommand{\E}{\mathbb{E}}

\newcommand{\R}{\mathbb{R}}
\newcommand{\bH}{\mathbb{H}}
\newcommand{\bP}{\mathbb{P}}
\newcommand{\tG}{\tilde{G}}

\newtheorem{prop}{Proposition}

\newtheorem{defi}{Definition}
\newtheorem{thm}{Theorem}

\newtheorem{rmk}{Remark}
\newtheorem{assumption}{Assumption}

\title{Stochastic Gradient Descent over $\mathcal{P}_2$}

\author{
Maria Oprea\thanks{Center for Applied Mathematics, Cornell University, \url{moprea@ista.ac.at}, corresponding author} \thanks{Current address: Institute of Science and Technology Austria (ISTA)} 
\and Qin Li\thanks{Department of Mathematics, University of Wisconsin--Madison, USA, \url{qinli@math.wisc.edu}}
\and Yunan Yang\thanks{Department of Mathematics, Cornell University, USA, \url{yunan.yang@cornell.edu}}
}

\date{}

\begin{document}

\maketitle

\begin{abstract}
Stochastic gradient descent (SGD) admits diffusion approximations that replace the complicated randomness of stochastic gradients by Gaussian noise, providing a powerful tool for understanding its dynamics and long-time behavior. We investigate whether an analogous approximation principle holds for optimization over probability measures, where the objective is a functional defined on the Wasserstein space $\mathcal P_2$. The nonlinear geometry and infinite-dimensional nature of $\mathcal P_2$ prevent a direct extension of the classical Euclidean theory. Using Lions differentiability, we lift the problem to a linear Hilbert space, where higher-order differential calculus becomes available. We then construct a Gaussian random-field approximation whose velocity field matches the mean and covariance of the original stochastic gradient. By exploiting this moment matching through higher-order Taylor expansions, we show that the Gaussian approximation captures the SGD dynamics with second-order weak accuracy. Our result provides a rigorous foundation for replacing sample-driven randomness by analytically tractable Gaussian fluctuations in stochastic optimization over probability measures. \footnote{Keywords: optimization, stochastic gradient descent, distributions. 
\par \,\,\,\,\,MSC Classification: 58J65, 35Q49, 49Q22. }
\end{abstract}
\section{Introduction}\label{sec:introduction}
Stochastic gradient descent (SGD) is one of the fundamental algorithms in modern machine learning and large-scale optimization. Given an expected loss
\begin{equation}
\label{eq:finite_obj}
\min_{x \in \mathbb{R}^d}
F(x):=\mathbb{E}_{\gamma}[f(x,\gamma)],
\end{equation}
SGD generates iterates according to
\begin{equation}
\label{eq:SGD_classic}
x_{n+1} =x_n -h_n \nabla_x f(x_n,\gamma_n),
\end{equation}
where ${\gamma_n}$ are independent samples and $h_n$ is the step size. Since $\mathbb{E}_{\gamma}\big[\nabla_x f(x,\gamma)\big]=\nabla_x F(x)$, the stochastic gradient provides an unbiased approximation of the exact gradient, and \eqref{eq:SGD_classic} may be viewed as a stochastic counterpart of the classical gradient descent iteration
\begin{equation}\label{eq:GD_classic}
x_{n+1}
=
x_n
-
h_n
\nabla_x F(x_n).
\end{equation}
Introduced by Robbins and Monro~\cite{robbins1951stochastic}, SGD enables efficient optimization in high-dimensional settings and forms the computational backbone of contemporary machine learning~\cite{bottou2018optimization}.

Beyond computational efficiency, the introduced stochasticity plays a crucial role in the success of SGD. The noise introduced through random sampling often improves exploration of the optimization landscape and can lead to superior practical performance compared with deterministic gradient methods. Understanding the effect of this stochasticity has motivated extensive theoretical investigations. One particularly influential viewpoint studies continuous-time limits of SGD. Under suitable scalings, SGD can be approximated by a stochastic differential equation (SDE)
\begin{equation}\label{eq:SDE_classic}
\mathrm{d}X_t = - \nabla F(X_t)\mathrm{d}t +\sqrt{2D(t,X_t)},\mathrm{d}W_t\,,
\end{equation}
where the diffusivity coefficient $D$ depends on the stochastic gradient noise. This perspective connects optimization with Langevin dynamics, simulated annealing, and Fokker--Planck equations~\cite{geman1986diffusions,chiang1987diffusion,kushner1987asymptotic,hwang1990large,gelfand1991recursive}, providing analytical tools that have led to a deeper understanding of the behavior of SGD~\cite{cheng2020stochastic}.

The connection between \eqref{eq:SGD_classic} and \eqref{eq:SDE_classic} is far from obvious. The randomness in SGD originates from sampling the random variable $\gamma$, whereas the randomness in the SDE is generated by Brownian motion and is therefore Gaussian. These stochastic mechanisms are fundamentally different, and the diffusion process can only be viewed as an approximation of the discrete algorithm. Establishing the validity of such diffusion approximations, and quantifying the associated approximation error, have been the subject of considerable study in stochastic approximation theory and stochastic modified equations; see, for example,~\cite{li2019stochastic}.

While classical SGD is formulated in finite-dimensional Euclidean spaces, many contemporary problems require optimization over probability distributions. Examples arise in variational inference, Bayesian sampling~\cite{jordan1998variational,SVGD_Liu}, generative modeling~\cite{XieCheng_generative,han2026onestepgenerativemodelingwasserstein}, and mean-field systems~\cite{DingChenLiWright,Rotskoff_VE,chizat2018global,geshkovski2025mathematical}. In such settings, the optimization variable is no longer a finite-dimensional parameter vector but a probability measure itself. This naturally leads to problems of the form
\begin{equation}\label{eq:problem}
\min_{\rho\in\mathcal P_2(\mathcal X)}
E(\rho)\,,\qquad\text{with}\qquad E(\rho) :=\mathbb E_\gamma\big[\mathcal E_\gamma(\rho)\big]\,,
\end{equation}
where $\mathcal P_2(\mathcal X)$ denotes the space of probability measures on $\mathcal X$ with finite second moment, $\gamma$ is a random parameter defined on a probability space $(\Gamma,\mathcal F,\mathbb P)$, and $\mathcal E_\gamma:\mathcal P_2(\mathcal X)\to\mathbb R$ is a sample-dependent loss functional. This definition is an analogue of~\eqref{eq:finite_obj}.

Extending the SGD-SDE paradigm from Euclidean spaces to probability spaces introduces several additional difficulties. The first is geometric. Unlike $\mathbb R^d$, the space $\mathcal P_2(\mathcal X)$ is not linear, and therefore the additive update underlying classical SGD is no longer meaningful. Fortunately, this obstacle has largely been resolved through the development of optimal transport and Wasserstein gradient-flow theory. Following the framework of Ambrosio, Gigli, and Savar\'e~\cite{ambrosio2005gradient}, one may endow $\mathcal P_2(\mathcal X)$ with the Wasserstein metric and define gradients and gradient flows for sufficiently regular functionals.

The second difficulty concerns the nature of the stochasticity. In SGD, randomness enters through independent draws of the sample variable $\gamma$ from its underlying distribution, and the resulting stochastic gradient noise is generally non-Gaussian. By contrast, the continuous-time diffusion model is driven by Brownian motion, or, in the probability-space setting, by a Gaussian random field. Thus the two dynamics are expected to differ pathwise. Following the stochastic modified equation perspective of~\cite{li2019stochastic}, we instead compare the discrete stochastic iteration and its diffusion approximation in a weak sense. This replaces trajectory-wise accuracy by accuracy of expectations against suitable test functionals, for which matching the drift and covariance of the stochastic increments becomes the central requirement.

A third difficulty arises when one attempts to extend the stochastic modified equation framework itself to probability spaces. In the finite-dimensional setting, the work of~\cite{li2019stochastic} establishes a continuous-in-time SDE whose weak solutions provide a high-order approximation of the SGD iterates. Repeating the same program in $\mathcal P_2(\mathcal X)$ is considerably more delicate. Since probability measures are inherently infinite-dimensional objects, the corresponding continuous-time limit is no longer an SDE but rather a stochastic partial differential equation (SPDE). Within the Wasserstein gradient-flow framework, a formal derivation suggests an SPDE driven by a Gaussian velocity field whose covariance is determined by the stochastic gradient fluctuations. At present, however, the well-posedness and regularity theory for such dynamics remain largely unresolved. Consequently, rather than constructing the limiting SPDE directly, we develop the theory at the discrete-time level and use weak approximation arguments to characterize the stochastic dynamics.

Combining these observations, we arrive at a natural analogue of SGD on the Wasserstein space $\mathcal P_2(\mathcal X)$. Following the geometric framework of Wasserstein gradient flows, each stochastic realization generates a velocity field through the first variation of the sample-dependent objective. The resulting stochastic update takes the form:
\begin{equation}
\label{eq:JKO_SGD}
\rho_{n+1} = \Bigl(\mathrm{Id} - h u_{\gamma_n}(\cdot,\rho_n) \Bigr)_\# \rho_n,\quad\text{with}\quad u_\gamma(x,\rho):= \nabla_x \frac{\delta\mathcal E_\gamma}{\delta\rho}(\rho)(x)\,,
\end{equation}
where $(\gamma_n)_{n\ge0}$ are independent samples drawn from $(\Gamma,\mathcal F,\mathbb P)$ and $\#$ denotes the pushforward operator \cite{ambrosio2005gradient}. This iteration is the natural measure-valued analogue of classical SGD. Its relation to deterministic Wasserstein gradient descent mirrors the relation between \eqref{eq:SGD_classic} and \eqref{eq:GD_classic} in Euclidean optimization.

Likewise, following the stochastic modified equation philosophy of~\cite{li2019stochastic}, we seek a Gaussian approximation that matches the first two moments of the stochastic velocity field. This leads to the auxiliary iteration:
\begin{equation}\label{eq:JKO_Gaussian}
\widetilde{\rho}_{n+1}= \left(\mathrm{Id}-hu_{\omega_n}(\cdot,\widetilde{\rho}_n)\right)_\# \widetilde{\rho}_n\,,
\end{equation}
where, conditional on $\widetilde{\rho}_n$, the random field $u_{\omega_n}(\cdot,\widetilde{\rho}_n)$ is a Gaussian field that share the mean and the covariance with $u_{\gamma}$.

The central objective of this work is to establish a rigorous connection between the stochastic iteration \eqref{eq:JKO_SGD} and its Gaussian approximation \eqref{eq:JKO_Gaussian}. Since the two velocity fields share the same first and second moments, one naturally expects the two dynamics to agree up to second order in the time step. Our main result confirms this intuition. Adopting the weak-approximation framework of~\cite{li2019stochastic}, we compare the two processes through sufficiently regular observables and prove a second-order weak error estimate. More precisely, for a suitable class of test functionals on $\mathcal P_2(\mathcal X)$, the discrepancy between the two updates is shown to be of order $\mathcal O(h^2)$. As in the classical theory of stochastic modified equations, the associated error bound grows exponentially with the final time.

The proof relies on two key ingredients. The first is the Wasserstein differential calculus developed through Lions derivatives. By lifting functionals on $\mathcal P_2(\mathcal X)$ to an underlying Hilbert space, Lions differentiability provides a convenient framework for defining higher-order derivatives and carrying out the Taylor expansions required for moment matching. The second ingredient is an iterative weak-error representation formula. In the finite-dimensional setting, a related construction was introduced in~\cite{li2019stochastic} and plays a crucial role in controlling the accumulation of local approximation errors. A similar mechanism is employed here. As in the Euclidean theory, repeated iterations lead to a progressive loss of regularity of the test functional, which must be compensated through the smoothing properties of the updates themselves.

The remainder of the paper is organized as follows. Section~\ref{sec:background} reviews the Wasserstein geometry and Lions differential calculus that form the analytical foundation of our approach. Section~\ref{sec:main} introduces the Gaussian approximation and establishes the main weak-convergence results. Section~\ref{sec:numerics} presents numerical experiments illustrating the second-order weak error in both a solvable benchmark and a state-dependent distribution-reconstruction problem. Section~\ref{sec:conclusion} concludes with limitations and directions for future work.

\section{Mathematical background}\label{sec:background}

This section collects the background material used throughout the weak-convergence analysis. The proof combines four ingredients. Wasserstein geometry provides the gradient-flow structure underlying the stochastic update. Lions differentiability supplies a differential calculus suitable for higher-order Taylor expansions on probability spaces. A covariance identity for bounded bilinear forms allows us to match the second-order fluctuations of the stochastic and Gaussian velocity fields. Finally, we introduce a class of sufficiently regular test functionals on $\calP_2(\calX)$ against which the weak error will be measured.

Throughout the paper, we assume that $\calX\subset\R^d$ is compact and convex. We write
\[
R_\calX:=\sup_{x\in\calX}|x|.
\]
Since $\calX$ is compact, every probability measure on $\calX$ has finite second moment. Consequently, $\calP_2(\calX)=\calP(\calX)$ are two equivalent sets. We nevertheless retain the notation $\calP_2(\calX)$ to emphasize the Wasserstein geometry.

\subsection{Wasserstein gradient flows}\label{subsec:W2_geo}

The geometric structure of $\calP_2(\calX)$ is induced by the Wasserstein metric. A particularly useful characterization is the Benamou--Brenier dynamic formulation; see \cite[Chapter 8]{topicsOT}:
\[
W_2^2(\mu,\nu)
= \inf_{(\mu_t,v_t)}
\left\{
\int_0^1\int_{\calX}|v_t(x)|^2,d\mu_t(x)dt\middle|
\partial_t\mu_t+\nabla\cdot(\mu_t v_t)=0,\
\mu_0=\mu,\
\mu_1=\nu
\right\}.
\]
Here, the continuity equation is understood in the weak sense, together with the usual no-flux boundary condition whenever $\calX$ has boundary. This formulation highlights the velocity-field representation of transport, which forms the basis of the gradient-flow construction.

A key consequence of the Wasserstein geometry is that sufficiently regular functionals admit a notion of gradient. Let
\[
\calE:\calP_2(\calX)\to\R
\]
be a smooth functional. Then its Wasserstein gradient can be expressed through the first variation as
\[
\nabla^{W_2}\calE(\rho) =-\nabla\cdot\left(\rho\nabla \left. \frac{\delta\calE}{\delta\rho} \right|_\rho\right)\,,
\]
where $\frac{\delta\calE}{\delta\rho}$ denotes the first variation of $\calE$; see, for example, \cite[Proposition 2.2]{natural_gradient}. Throughout the paper, we identify the Wasserstein gradient flow as:
\[
\partial_t\rho_t=-\nabla^{W_2}\calE(\rho_t)= \nabla\cdot \Bigl( \rho_t u(\cdot,\rho_t) \Bigr)\,,\quad\text{with}\quad u(x,\rho) := \nabla_x
\left.
\frac{\delta\calE}{\delta\rho}
\right|_\rho(x)\,.
\]
Applying a forward-Euler discretization to this evolution yields the deterministic update
\[
\rho_{n+1}=\Bigl( \mathrm{Id}-h u(\cdot,\rho_n) \Bigr)_\# \rho_n\,.
\]

Returning to the stochastic optimization problem \eqref{eq:problem}, each realization $\gamma$ gives rise to a sample-dependent functional $\calE_\gamma$ and therefore $\nabla^{W_2}\calE_\gamma(\rho)$ gives rise to a corresponding velocity field
\begin{equation}
\label{eq:u_gamma_background}
u_\gamma(\cdot,\rho):=\nabla_x \left. \frac{\delta\calE_\gamma}{\delta\rho}\right|_\rho \,.
\end{equation}
Replacing the deterministic velocity field in the forward-Euler scheme by a single sample realization $u_{\gamma_n}$ yields precisely the stochastic iteration~\eqref{eq:JKO_SGD}.



\subsection{Lions differentiability and Taylor expansion}\label{sec:lions}

The analysis developed later relies heavily on higher-order Taylor expansions of functionals defined on $\calP_2(\calX)$. While Wasserstein geometry provides a natural notion of gradient, repeatedly differentiating functionals directly on the space of probability measures is considerably less convenient. A powerful alternative is provided by Lions differentiability, which lifts functionals on $\calP_2(\calX)$ to a linear Hilbert space where classical Fréchet calculus becomes available; see~\cite{gangbo1}, \cite[Chapter 5.2]{lions}, and~\cite{notes_on_mfg}.

Let $(\Theta,\mathcal F_\theta,\bP_\theta)$ be a nonatomic probability space, and define
\[
\bH:=L^2(\Theta,\bP_\theta;\R^d)\,,\quad\text{equipped with}\quad 
\langle X,Y\rangle_{\bH}
:=
\int_\Theta
\langle X(\theta),Y(\theta)\rangle \,d\bP_\theta(\theta).
\]
Since $\bP_\theta$ is nonatomic,  and $\calX \subset \R^d$, every measure $\mu\in\calP_2(\calX)$ can be realized as the law of some random variable $X\in\bH$; see~\cite[Theorem 3.18]{santambrogio},  with $X(\Theta) \subseteq \calX$. That is, $\mu =X_\#\bP_\theta$. Consequently, a functional
\[
G:\calP_2(\calX)\to\R
\]
can be lifted to a functional on $\bH$ by defining
\[
\tG(X) := G(X_\#\bP_\theta), \qquad X\in\bH\,.
\]
The key advantage of this construction is that the domain of $\tG$ is now a Hilbert space, allowing one to deploy the familiar machinery of Fr\'echet differentiation.

A subtlety arises from the fact that the lift is not unique. Indeed, multiple random variables may induce the same probability measure: $X_1{}_\#\bP_\theta= X_2{}_\#\bP_\theta$. The lifted functional itself is law invariant, namely
\[
\tG(X_1) =\tG(X_2)\,, \qquad \text{whenever } X_1{}_\#\bP_\theta=X_2{}_\#\bP_\theta\,.
\]
However, it is not immediately clear that the corresponding Fr\'echet derivatives inherit the same invariance property. In principle, one could have
\[
\left. \frac{\delta\tG}{\delta X} \right|_{X_1} \neq \left. \frac{\delta\tG}{\delta X} \right|_{X_2}\,, \qquad X_1{}_\#\bP_\theta=X_2{}_\#\bP_\theta\,.
\]
If this occurs, the differential calculus on $\bH$ would depend on the particular choice of lift and therefore fail to define an intrinsic notion of differentiability on $\calP_2(\calX)$. Establishing law invariance of the derivatives is therefore essential for connecting the lifted calculus with the geometry of probability measures.

This issue is resolved by the classical theory of Lions differentiability. In particular, \cite[Theorem 5.24]{lions} and \cite[Theorem 6.2]{notes_on_mfg} imply that, whenever a law-invariant lift $\tilde G$ is Fr\'echet differentiable, its derivative is itself law invariant and therefore depends only on the measure $\mu=X_\#\bP_\theta$. Consequently, Fr\'echet differentiability of the lift induces a well-defined notion of differentiability on $\calP_2(\calX)$.

\begin{defi}[Lions differentiability]\label{def:lions}
Let $\mu\in\calP_2(\calX)$ and let $X\in\bH$ be its lift satisfying $X_\#\bP_\theta=\mu$. We say that $G:\calP_2(\calX)\to\R$ is Lions differentiable at $\mu$ if its lift $\tG$ is Fr\'echet differentiable at $X$. By the preceding law-invariance result, there exists $\partial_\mu G(\mu):\calX\to\R^d$ such that
\[
\partial_\mu G(\mu)(X)=d\tG_X\,,\quad\text{for all}\quad X_\#\mathbb{P}_\theta=\mu\,,
\]
where $d\tG_X$ denotes the Fr\'echet derivative of $\tG$ evaluated at $X$. 
This function is called the Lions derivative of $G$ at $\mu$. When the first variation exists and is sufficiently regular, it coincides with the velocity-field generating the Wasserstein gradient:
\[
\partial_\mu G(\mu)(x) = \nabla_x
\left.\frac{\delta G}{\delta\rho}\right|_{\mu}(x)\,,\quad\text{so that}\quad\nabla^{W_2}G(\mu)=-\nabla\cdot\left(\mu\partial_\mu G(\mu)\right) \,.
\]
\end{defi}

For the purpose of defining gradients on probability spaces, first-order differentiability is sufficient. The analysis developed later, however, relies on repeated Taylor expansions and therefore requires higher-order derivatives. We will need an analogue of the above law-invariance property beyond first order. The following proposition establishes precisely this extension.

\begin{prop}\label{prop:independence_on_lift}
Let $G:\calP_2(\R^d)\to\R$, and suppose that its lift $\tG:\bH\to\R$ is $k$ times Fr\'echet differentiable. Let $\mu\in\calP_2(\R^d)$ and $v_1,\ldots,v_k\in L^2(\mu;\R^d)$. Then
\[
\partial_\mu^kG[v_1,\ldots,v_k]
:=
d_X^k\tG[v_1\circ X,\ldots,v_k\circ X]
\]
is independent of the choice of lift $X\in\bH$ satisfying $X_\#\bP_\theta=\mu$. Equivalently, whenever
\[
{X_1}_\#\bP_\theta={X_2}_\#\bP_\theta=\mu,
\]
one has
\[
d_X^k\tG[v_1\circ X_1,\ldots,v_k\circ X_1]
=
d_X^k\tG[v_1\circ X_2,\ldots,v_k\circ X_2].
\]
\end{prop}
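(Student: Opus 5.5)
We prove the claim by combining law invariance of $\tG$ with measure-preserving rearrangements of the nonatomic space $\Theta$, then passing to the limit using continuity of the multilinear form $d^k_X\tG$.

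\textbf{Step 1 (exact invariance under measure-preserving maps).} Suppose $\tau:\Theta\to\Theta$ is measurable, measure-preserving, and has a measurable measure-preserving inverse. Then $Y\mapsto Y\circ\tau$ is a linear isometry of $\bH$. Every $Y\in\bH$ satisfies $(Y\circ\tau)_\#\bP_\theta=Y_\#\bP_\theta$, so law invariance gives $\tG(Y\circ\tau)=\tG(Y)$ for all $Y$. Differentiating this identity $k$ times at $Y=X$, and using that composition with $\tau$ is a bounded linear map, we obtain
\[
d^k_{X\circ\tau}\tG[h_1\circ\tau,\ldots,h_k\circ\tau]=d^k_X\tG[h_1,\ldots,h_k]
\]
for all $h_i\in\bH$. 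Taking $h_i=v_i\circ X$ gives the claimed equality whenever $X_2=X_1\circ\tau$ for such a $\tau$.

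\textbf{Step 2 (approximating a general second lift).} Two lifts with the same law need not be related by an exact bijection $\tau$. A standard fact for nonatomic standard probability spaces (used in \cite[Theorem 6.2]{notes_on_mfg} and \cite{lions} for $k=1$) is the following: given $X_1,X_2$ with the same law and $\varepsilon>0$, there exists an invertible measure-preserving $\tau_\varepsilon$ with $\|X_1\circ\tau_\varepsilon-X_2\|_{\bH}<\varepsilon$. One builds it by partitioning $\R^d$ into small cubes; the preimages under $X_1$ and $X_2$ of each cube have equal measure, and nonatomicity lets us map one onto the other isomorphically. The same construction approximates $v_i\circ X_2$ by $v_i\circ X_1\circ\tau_\varepsilon$. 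For this we first assume each $v_i$ is continuous and bounded, in which case uniform continuity on a large ball controls the error. For general $v_i\in L^2(\mu)$ we use the density of $C_b$ in $L^2(\mu)$, together with $\|v\circ X\|_{\bH}=\|v\|_{L^2(\mu)}$ for every lift $X$, which makes both sides of the claimed identity continuous in each $v_i$, uniformly in the choice of lift.

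\textbf{Step 3 (limit).} By Step 1, $d^k\tG$ evaluated at $(X_1\circ\tau_\varepsilon;\,v_i\circ X_1\circ\tau_\varepsilon)$ equals its value at $(X_1;\,v_i\circ X_1)$. We then let $\varepsilon\to0$. The multilinear arguments converge in $\bH$ and the base points converge to $X_2$, so we need continuity of $X\mapsto d^k_X\tG$ in operator norm. This is the main obstacle, since mere $k$-fold Fréchet differentiability only gives continuity of $d^{k-1}\tG$. I would handle it in one of two ways. The first option is to assume $\tG\in C^k$, which is what the later regular test-functional class provides anyway. The second option avoids that assumption: apply the $(k-1)$-order statement inductively together with the definition of $d^k$ as the Fréchet derivative of $d^{k-1}$. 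The $k$-th derivative is then a limit of difference quotients of $(k-1)$-th derivatives in directions $v_k\circ X$, and each quotient is lift-independent by induction, because $X+t\,v_k\circ X=(\mathrm{Id}+tv_k)\circ X$ is again a function of $X$ whose law depends only on $\mu$. This inductive route, with base case $k=1$ given by \cite[Theorem 5.24]{lions}, is the one I would try first, since it needs no extra continuity.
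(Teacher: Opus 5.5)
Your rearrangement route (Steps 1--3) closes only under the extra hypothesis $\tG\in C^k$, which the statement does not grant. Step 2 also tacitly needs $\Theta$ to be a standard probability space, whereas the paper assumes only nonatomicity. So everything rests on the inductive route, and that route has a gap at its key step.

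To invoke the order-$(k-1)$ statement at the new base point $Y_t:=(\mathrm{Id}+tv_k)\circ X$, which is a lift of $\mu_t:=(\mathrm{Id}+tv_k)_\#\mu$, the remaining directions must have the form $w_i\circ Y_t$ with $w_i\in L^2(\mu_t)$. Your difference quotient instead uses the directions $v_i\circ X$, $i<k$, and these are in general not functions of $Y_t$. For a non-Lipschitz $v_k\in L^2(\mu)$, the map $\mathrm{Id}+tv_k$ need not be injective however small $|t|>0$ is (e.g.\ $\mu$ uniform on $[0,1]$, $v_k(x)=\sin(1/x)$), so $X$ cannot be recovered from $Y_t$. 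Observing that the base point is a function of $X$ is therefore not enough. The induction hypothesis you would actually need is stronger: $d^m_{F\circ X}\tG[w_1\circ X,\ldots,w_m\circ X]$ depends only on $X_\#\bP_\theta$ for arbitrary measurable $F$ and $w_i$. In other words, base point and directions must all be functions of one common $X$, and you have not set up the induction that way.

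The missing idea is to apply your observation to all $k$ directions at once, not only to the base point. For $t\in\R^k$ one has $X+\sum_i t_iv_i\circ X=(\mathrm{Id}+\sum_i t_iv_i)\circ X$, whose law $(\mathrm{Id}+\sum_i t_iv_i)_\#\mu$ does not depend on the lift. By law invariance, the function $t\mapsto\tG(X+\sum_i t_iv_i\circ X)$ on $\R^k$ is the same for every lift $X$ of $\mu$. It is $k$ times differentiable because $\tG$ is. Its mixed partial $\partial_{t_1}\cdots\partial_{t_k}$ at $t=0$ equals $d^k_X\tG[v_1\circ X,\ldots,v_k\circ X]$, which is therefore lift-independent. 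This is exactly the paper's proof. It needs no induction, no first-order Lions theorem, no rearrangements of $\Theta$, and no continuity of $d^k\tG$.

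Your induction can also be patched. First take $v_k$ Lipschitz, so that $\mathrm{Id}+tv_k$ is injective for small $|t|$ and $v_i\circ X=(v_i\circ(\mathrm{Id}+tv_k)^{-1})\circ Y_t$. Then pass to general $v_k$ by density together with boundedness of $d^k_X\tG$. This works but is far more roundabout.
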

The proof is elementary, but since we were unable to locate a corresponding statement in the literature, we include it for completeness.
\begin{proof}[Proof of Proposition~\ref{prop:independence_on_lift}]
For $t=(t_1,\ldots,t_k)\in\R^k$, define
\[
T_t(x)=x+\sum_{i=1}^k t_i v_i(x).
\]
Since $X_1$ and $X_2$ have the same law $\mu$,
\[
\left(X_1+\sum_{i=1}^k t_i v_i\circ X_1\right)_\#\bP_\theta=(T_t)_\#\mu = \left(X_2+\sum_{i=1}^k t_i v_i\circ X_2\right)_\#\bP_\theta.
\]
The law invariance of $\tG$ therefore gives
\[
\tG\left(X_1+\sum_{i=1}^k t_i v_i\circ X_1\right)
=
\tG\left(X_2+\sum_{i=1}^k t_i v_i\circ X_2\right)
\]
for every $t\in\R^k$. Taking the mixed derivative $\partial_{t_1}\cdots\partial_{t_k}$ at $t=0$ yields
\[
d_X^k\tG[v_1\circ X_1,\ldots,v_k\circ X_1]
=
d_X^k\tG[v_1\circ X_2,\ldots,v_k\circ X_2].
\]
Hence $\partial_\mu^kG$ is well defined independently of the chosen lift of $\mu$.
\end{proof}





This measure-invariance property for all high-order derivatives allows one to perform Taylor expansion, and we summarize it below.

\begin{thm}[Taylor expansion in the Lions sense]\label{thm:Taylor}
Let $G:\calP_2(\calX)\to\R$ admit a law-invariant lift $\tG:\bH\to\R$. Let $X, H\in\bH$. Assume that $\tG$ is $C^3$ on an open neighborhood of the segment
\[
\{X+\tau H:\tau\in[0,1]\}.
\]
Define
\[
\mu:=X_\#\bP_\theta,
\qquad
\widetilde\mu:=(X+H)_\#\bP_\theta.
\]
Then
\begin{equation}\label{eq:taylor_lions}
G(\widetilde\mu)
=
G(\mu)
+
d_X\tG[H]
+
\frac12 d_X^2\tG[H,H]
+
R_3(X,H),
\end{equation}
where
\begin{equation}\label{eq:remainder_lions}
R_3(X,H)
=
\int_0^1
\frac{(1-\tau)^2}{2}
d_{X+\tau H}^3\tG[H,H,H]\,d\tau .
\end{equation}
Consequently, if $\sup_{\tau\in[0,1]} \|d_{X+\tau H}^3\tG\|_{\mathrm{op}} \le M$, then
\begin{equation}\label{eq:taylor_remainder_bound}
|R_3(X,H)|
\le
\frac{M}{6}\|H\|_{\bH}^3.
\end{equation}
\end{thm}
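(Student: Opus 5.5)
The plan is to reduce the statement to the one-variable Taylor formula with integral remainder, applied to the scalar function obtained by restricting the lift to the segment. Define
\[
\phi(\tau):=\tG(X+\tau H),\qquad \tau\in[0,1].
\]
Since $\tG$ is $C^3$ on an open neighborhood of the segment $\{X+\tau H\}$, the map $\tau\mapsto X+\tau H$ is affine, hence smooth, from a neighborhood of $[0,1]$ in $\R$ into that open set. By the chain rule for Fr\'echet derivatives, $\phi$ is $C^3$ on a neighborhood of $[0,1]$, with
\[
\phi^{(j)}(\tau)=d^j_{X+\tau H}\tG[H,\ldots,H],\qquad j=1,2,3.
\]
This can be checked inductively: the derivative of $\tau\mapsto d^{j}_{X+\tau H}\tG[H,\ldots,H]$ is $d^{j+1}_{X+\tau H}\tG[H,\ldots,H]$, because evaluation of a multilinear form at fixed arguments is a bounded linear operation.

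Next I will apply the scalar Taylor formula with integral remainder to order two:
\[
\phi(1)=\phi(0)+\phi'(0)+\tfrac12\phi''(0)+\int_0^1\frac{(1-\tau)^2}{2}\phi'''(\tau)\,d\tau .
\]
It remains to identify each term. By law invariance of the lift, $\phi(1)=\tG(X+H)=G((X+H)_\#\bP_\theta)=G(\widetilde\mu)$ and $\phi(0)=\tG(X)=G(\mu)$. The derivative terms are exactly $d_X\tG[H]$, $\tfrac12 d_X^2\tG[H,H]$, and the remainder \eqref{eq:remainder_lions}. This gives \eqref{eq:taylor_lions}.

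For the remainder bound I use the definition of the operator norm of a trilinear form, $|d^3_{X+\tau H}\tG[H,H,H]|\le \|d^3_{X+\tau H}\tG\|_{\mathrm{op}}\|H\|_{\bH}^3\le M\|H\|_{\bH}^3$. Together with $\int_0^1 (1-\tau)^2/2\,d\tau=1/6$, this yields \eqref{eq:taylor_remainder_bound}.

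I expect no step to be a genuine obstacle. The only points that need care are justifying the chain-rule identification of $\phi^{(j)}$, which requires the open-neighborhood hypothesis so that derivatives at the endpoints make sense, and noting that $\phi'''$ is continuous, so the remainder integral is well defined. The connection to intrinsic quantities comes afterwards: when $H=v\circ X$ for some $v\in L^2(\mu;\R^d)$, Proposition~\ref{prop:independence_on_lift} shows that the first- and second-order terms equal $\partial_\mu G[v]$ and $\partial^2_\mu G[v,v]$, so they depend only on $\mu$ and not on the lift. This is not needed for the identity itself, but I will remark on it to tie the expansion to the Lions calculus.
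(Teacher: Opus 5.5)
Your proposal is correct and follows essentially the same route as the paper: restrict the lift to the segment via $\phi(\tau)=\tG(X+\tau H)$, apply the one-dimensional Taylor formula with integral remainder, and bound the cubic remainder using the operator norm together with $\int_0^1\frac{(1-\tau)^2}{2}\,d\tau=\frac16$. Your added justification of the chain-rule identities for $\phi^{(j)}$ is a welcome expansion of what the paper states in a single line. The identification $\phi(1)=G(\widetilde\mu)$ follows directly from the definition of the lift rather than from law invariance, but this changes nothing.
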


\begin{proof}
Set
\[
\phi(\tau):=\tG(X+\tau H).
\]
By assumption, $\phi$ is $C^3$ on $[0,1]$, with
\begin{eqnarray*}
\phi'(\tau)=&d_{X+\tau H}\tG[H],\\
\phi''(\tau)=&d_{X+\tau H}^2\tG[H,H],\\
\phi^{(3)}(\tau)=&d_{X+\tau H}^3\tG[H,H,H].
\end{eqnarray*}
The one-dimensional Taylor formula with integral remainder gives
\[
\phi(1)
=
\phi(0)+\phi'(0)+\frac12\phi''(0)
+
\int_0^1\frac{(1-\tau)^2}{2}\phi^{(3)}(\tau)\,d\tau.
\]
Using
\[
\phi(0)=G(X_\#\bP_\theta),
\qquad
\phi(1)=G((X+H)_\#\bP_\theta),
\]
gives \eqref{eq:taylor_lions} and \eqref{eq:remainder_lions}. The bound \eqref{eq:taylor_remainder_bound} follows from the definition of the operator norm and the identity $\int_0^1\frac{(1-\tau)^2}{2}\,d\tau=\frac16$.
\end{proof}

\subsection{Moment matching and Gaussian realization}\label{subsec:moment_matching}

A central objective of the present work is to compare the stochastic iteration~\eqref{eq:JKO_SGD} with its Gaussian counterpart~\eqref{eq:JKO_Gaussian}. The two updates are driven by fundamentally different sources of randomness: the former samples from the distribution of $\gamma$, whereas the latter samples from a Gaussian field. Exact agreement between the two random processes is therefore impossible in general. Instead, following the philosophy of stochastic modified equations, we seek an approximation that preserves the first few moments of the stochastic velocity field.

More precisely, for a fixed measure $\rho\in\calP_2(\calX)$, define the mean velocity field and the covariance kernel:
\[
\begin{cases}
\bar u_\rho(x) := \mathbb E_\gamma[u_\gamma(x,\rho)]\\
K_\rho(x,y) := \mathbb E_\gamma \Big[ \bigl(u_\gamma(x,\rho)-\bar u_\rho(x)\bigr) \otimes \bigl(u_\gamma(y,\rho)-\bar u_\rho(y)\bigr) \Big]\,.
\end{cases}
\]
The Gaussian field $u_\omega(\cdot,\rho)$ is then defined to be the Gaussian random field with mean $\bar u_\rho$ and covariance kernel $K_\rho$.

One convenient representation is provided by the Karhunen--Lo\'eve expansion:
\[
u_\omega(\cdot,\rho) :=U(x,\rho;\xi)=\bar u_\rho + \sum_{i=1}^{\infty}
\sqrt{\lambda_i(\rho)}\,\phi_i^\rho\,\xi_i\,,
\]
where $(\xi_i)_{i\ge1}$ are independent standard normal random variables and ${(\lambda_i(\rho),\phi_i^\rho)}_{i\ge1}$ denotes the eigensystem of the covariance operator associated with $K_\rho$. The notation $U(x,\rho;\xi)$ is introduced to emphasize the realization of the randomness through the countably many random variables $\xi$. Throughout this paper, $\xi$ will be exclusively used to denote randomness that is standard normal, while $\omega$ is used to denote the general Gaussian randomness. Moreover, $\Pi_\rho$ denotes the Gaussian measure with mean $\bar{u}_\rho$ and covariance kernel $K_\rho$, while $\E_\omega$ denotes the expectation with respect to $\Pi_\rho$.

The significance of this construction lies in the fact that the Gaussian field and the original stochastic velocity field possess identical first and second moments. Since the arguments developed later rely on second-order Taylor expansions, only these moments enter the leading-order error analysis. Consequently, quadratic expressions generated by the expansion should agree whenever the underlying random fields share the same mean and covariance.

The following proposition formalizes this observation in an abstract Hilbert-space setting. Notably, Gaussianity itself plays no role in the statement; only moment matching is required.

\begin{prop}\label{prop:bilinear_forms_and_covariance}
Let $\bH$ be a real Hilbert space, and let
\[
B:\bH\times\bH\to\R
\]
be a bounded bilinear form. Let $U$ and $V$ be square-integrable $\bH$-valued random variables, possibly defined on different probability spaces. Assume that $U$ and $V$ have the same mean $m\in\bH$ and the same covariance in the sense that, for all $h_1,h_2\in\bH$,
\[
\mathbb E\Big[ \langle U-m,h_1\rangle_{\bH} \langle U-m,h_2\rangle_{\bH} \Big]=\mathbb E \Big[ \langle V-m,h_1\rangle_{\bH} \langle V-m,h_2\rangle_{\bH}\Big]\,.
\]
Then
\[
\mathbb E[B(U,U)]=\mathbb E[B(V,V)]\,.
\]
\end{prop}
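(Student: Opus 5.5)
We decompose $U = m + (U-m)$ and use bilinearity:
\[
B(U,U) = B(m,m) + B(m,U-m) + B(U-m,m) + B(U-m,U-m).
\]
The same identity holds for $V$. The constant term $B(m,m)$ is identical for both. For the cross terms, we represent the bounded linear functionals $h\mapsto B(m,h)$ and $h\mapsto B(h,m)$ by Riesz vectors $a,b\in\bH$. This gives $\E[B(m,U-m)] = \E\langle a, U-m\rangle_{\bH} = 0$, since square-integrability makes $U$ Bochner integrable with $\E U = m$; the same holds for $V$. It therefore remains to show
\[
\E[B(U-m,U-m)] = \E[B(V-m,V-m)].
\]

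For the quadratic term, we write $B(x,y) = \langle Ax, y\rangle_{\bH}$ with $A$ a bounded operator, and fix an orthonormal basis $(e_i)$ of $\bH$. We may assume $\bH$ is separable, or else restrict to the closed separable subspace in which $U-m$ and $V-m$ take values almost surely; this is legitimate by the Pettis measurability of Bochner-measurable random variables, and we take the union of the two subspaces. Setting $W = U-m$ and expanding in the basis,
\[
B(W,W) = \sum_{i,j} \langle W,e_i\rangle \langle W,e_j\rangle\, B(e_i,e_j),
\]
where the double series should be understood as an iterated limit. More cleanly, let $P_N$ be the orthogonal projection onto $\mathrm{span}\{e_1,\dots,e_N\}$. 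Then
\[
B(P_N W, P_N W) = \sum_{i,j\le N} B(e_i,e_j)\, \langle W,e_i\rangle \langle W,e_j\rangle,
\]
a finite sum. Its expectation is therefore
\[
\sum_{i,j\le N} B(e_i,e_j)\, \E\big[\langle W,e_i\rangle\langle W,e_j\rangle\big],
\]
which by the covariance hypothesis (with $h_1=e_i$, $h_2=e_j$) coincides with the analogous quantity for $V-m$.

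The main, though mild, obstacle is passing $N\to\infty$. Pointwise, $P_N W\to W$ in $\bH$, so by continuity of $B$ we have $B(P_NW,P_NW)\to B(W,W)$. Moreover,
\[
|B(P_NW,P_NW)| \le \|B\|\,\|P_NW\|^2 \le \|B\|\,\|W\|^2,
\]
and the right-hand side is integrable because $U$ is square-integrable (then $\E\|W\|^2 \le 2\E\|U\|^2 + 2\|m\|^2 < \infty$). Dominated convergence then gives
\[
\E[B(P_NW,P_NW)]\to \E[B(W,W)],
\]
and the same holds for $V$. Since the approximants agree for every $N$, the limits agree, and combining this with the first paragraph yields $\E[B(U,U)] = \E[B(V,V)]$.
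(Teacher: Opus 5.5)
Your proof is correct and follows essentially the paper's route. You center at $m$, kill the cross terms using $\E[U-m]=0$, and reduce the quadratic term to the covariance entries $\E[\langle W,e_i\rangle\langle W,e_j\rangle]$ in an orthonormal basis. Where the paper simply invokes $\E\langle A\tilde U,\tilde U\rangle=\operatorname{tr}(AK)$, your finite-rank truncation with dominated convergence supplies the limit argument that identity relies on. Your reduction to a separable subspace also covers a case the paper leaves implicit when it indexes its basis by $j\ge 1$.
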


\begin{proof}
Since $B$ is a bounded bilinear form on $\bH$, there exists a unique bounded operator $A\in\mathcal L(\bH)$ such that
\[
B(h_1,h_2)=\langle Ah_1,h_2\rangle_{\bH}.
\]
Let
\[
\widetilde U:=U-m,
\qquad
\widetilde V:=V-m.
\]
Then
\[
\mathbb E[B(U,U)]
=
B(m,m)+\mathbb E[B(\widetilde U,\widetilde U)]
+
\mathbb E[B(m,\widetilde U)]
+
\mathbb E[B(\widetilde U,m)].
\]
Since $\mathbb E[\widetilde U]=0$, the two mixed terms vanish. Thus
\[
\mathbb E[B(U,U)]
=
B(m,m)+\mathbb E\langle A\widetilde U,\widetilde U\rangle_{\bH}.
\]
Similarly,
\[
\mathbb E[B(V,V)]
=
B(m,m)+\mathbb E\langle A\widetilde V,\widetilde V\rangle_{\bH}.
\]

Let $(e_j)_{j\ge1}$ be an orthonormal basis of $\bH$. Since $U$ and $V$ are square-integrable, their covariance operators are trace class. The equality of covariances implies that
\[
\mathbb E
\big[
\langle \widetilde U,e_i\rangle_{\bH}
\langle \widetilde U,e_j\rangle_{\bH}
\big]
=
\mathbb E
\big[
\langle \widetilde V,e_i\rangle_{\bH}
\langle \widetilde V,e_j\rangle_{\bH}
\big]
\]
for all $i,j$. Therefore the covariance operators of $\widetilde U$ and $\widetilde V$ are equal. Calling this common covariance operator $K$, we have
\[
\mathbb E\langle A\widetilde U,\widetilde U\rangle_{\bH}
=
\operatorname{tr}(AK)
=
\mathbb E\langle A\widetilde V,\widetilde V\rangle_{\bH}.
\]
Hence, $\mathbb E[B(U,U)] =\mathbb E[B(V,V)]$.

\end{proof}

\section{Convergence result}\label{sec:main}
With the technical preparation from Section~\ref{sec:background}, we are now ready to state and prove the main convergence result. Throughout this section, we rewrite the stochastic Wasserstein gradient descent~\eqref{eq:JKO_SGD} and its Gaussian approximation~\eqref{eq:JKO_Gaussian} as
\begin{equation}\label{eq:discr}
\rho_j\color{black} := \rho_j^Z=\left(
\mathrm{Id}-hu_{\gamma_j}(\cdot,\rho_{j-1}^Z)
\right)_\# \rho_{j-1}^Z, \qquad
j=1,\ldots,n,
\end{equation}
and
\begin{equation}\label{eq:gauss}
 \tilde{\rho}_j := \color{black}\rho_j^X=\left(
\mathrm{Id}-hu_{\omega_j}(\cdot,\rho_{j-1}^X)
\right)_\# \rho_{j-1}^X,\qquad j=1,\ldots,n.
\end{equation}
The superscripts $Z$ and $X$ are used to distinguish the two processes. Both evolutions start from the same initial condition
\begin{equation}\label{eq:initialization}
\rho_0^Z=\rho_0=\rho_0^X.
\end{equation}

The proof is guided by a stability-consistency decomposition. A direct comparison between the two iterations yields
\begin{equation}
\begin{aligned}
\rho_n^Z-\rho_n^X
=&\left(
\mathrm{Id} -hu_{\gamma_n}(\cdot,\rho_{n-1}^Z)
\right)_\# \rho_{n-1}^Z-\left(\mathrm{Id}-hu_{\omega_n}(\cdot,\rho_{n-1}^X) \right)_\# \rho_{n-1}^X
\\
=& \underbrace{\left(\mathrm{Id}-hu_{\gamma_n} (\cdot,\rho_{n-1}^Z)\right)_\# \rho_{n-1}^Z-\left(\mathrm{Id}-hu_{\gamma_n}(\cdot,\rho_{n-1}^Z)
\right)_\# \rho_{n-1}^X}_{\mathrm{Term\ I}}
\\
&+\underbrace{ \left(\mathrm{Id}-hu_{\gamma_n}(\cdot,\rho_{n-1}^Z)\right)_\# \rho_{n-1}^X-\left(
\mathrm{Id}-hu_{\gamma_n}(\cdot,\rho_{n-1}^X) \right)_\#
\rho_{n-1}^X }_{\mathrm{Term\ II}}
\\
&+ \underbrace{\left(\mathrm{Id}-hu_{\gamma_n}(\cdot,\rho_{n-1}^X) \right)_\#\rho_{n-1}^X-\left(\mathrm{Id}-hu_{\omega_n}(\cdot,\rho_{n-1}^X)\right)_\# \rho_{n-1}^X}_{\mathrm{Term\ III}}.
\end{aligned}\label{eq:decomposition}
\end{equation}

The decomposition in~\eqref{eq:decomposition} already reveals the main structure of the proof. Term~I propagates discrepancies between two measures through the same velocity field, while Term~II propagates the same measure through velocity fields evaluated at different states. Under suitable regularity assumptions, both terms behave as stability terms and lead only to a mild amplification of previously accumulated errors. Namely $\rho_n^Z-\rho_n^X \sim (1+Ch)(\rho_{n-1}^Z-\rho_{n-1}^X)$ for some $C$. The true approximation error is generated by Term~III, where the stochastic velocity field $u_\gamma$ is replaced by its Gaussian realization $u_\omega$. Consequently, the convergence analysis reduces to establishing a sufficiently accurate estimate for Term~III and controlling the accumulation of these local errors over multiple iterations.

Two additional ideas are required to turn this heuristic decomposition into a rigorous argument. First, the comparison cannot be carried out directly at the level of measures, as is done in~\eqref{eq:decomposition}. Instead, we compare the readings of sufficiently smooth test functionals acting on the two processes. This allows us to exploit the moment-matching property of $u_\gamma$ and $u_\omega$ through higher-order Taylor expansions. Second, the consistency error represented by Term~III is regenerated at every iteration. To accumulate these local estimates efficiently, it is convenient to restart the dynamics from arbitrary intermediate states and propagate observables backward through the iteration. These two ingredients naturally lead to the notions introduced below.

The first ingredient is to shift the perspective of propagating $\rho$ to propagating its representation over test functionals. 
\begin{defi}[Test functionals]\label{def:test_functions}
Let $k\ge0$. Denote by $\mathscr G^k$ the collection of functionals $G$ whose lifts $\tG$ are $k$ times Fr\'echet differentiable on $\bH$, and for which there exist constants $c^i \ge 0$ for all $i = 0 , \ldots, k$ such that
\begin{equation}
    |d^i_C\tG|_\mathrm{op} \leq c^i.
\end{equation}
Here $d_X^0\tG:=\tG(X)$ and $|d_X^0\tG|_{\mathrm{op}}:= |\tG(X)|$ while for $i\ge1$,
\[
|d_X^i\tG|_{\mathrm{op}}
:=
\sup_{H_1,\ldots,H_i\in\bH\setminus{0}}
\frac{
|d_X^i\tG[H_1,\ldots,H_i]|
}{
|H_1|_{\bH}\cdots|H_i|_{\bH}
}.
\]
In other words, $\mathscr G^k$ is the set of all functionals whose lifts are $k$ times Fr\'echet differentiable, with bounded derivatives in operator norm. 
We further define the compact-state seminorm
\begin{equation}\label{eq:G3_compact_norm}
|G|_{\mathscr G^k,\calX}
:=
\sup_{\operatorname{supp}(X)\subset\calX}
\max_{0\le i\le k}
|d_X^i\tG|_{\mathrm{op}}.
\end{equation}
Since $\calX$ is compact and $|X|_{\bH}\le R_\calX$ whenever $\operatorname{supp}(X)\subset\calX$, the seminorm above is finite.
\end{defi}

For $k = 0$ we denote by $\mathscr G:= \mathscr G^0 $ the space of all continuous functionals on $\calP(\calX)$. 
To track the evolution of observables, we introduce the one-step transition operators:

\begin{equation}\label{eq:P_h_gamma}
P_h^\gamma:\mathscr G\to\mathscr{G}\,,\quad(P_h^\gamma G)(\rho)
:=
\mathbb E_\gamma
\Big[
G\big(
(\mathrm{Id}-hu_\gamma(\cdot,\rho))_\#\rho
\big)
\Big],
\end{equation}
and
\begin{equation}\label{eq:P_h_omega}
P_h^\omega:\mathscr G\to\mathscr{G}\,,\quad (P_h^\omega G)(\rho)
:=
\mathbb E_\omega
\Big[
G\big(
(\mathrm{Id}-hu_\omega(\cdot,\rho))_\#\rho
\big)
\Big].
\end{equation}
These operators propagate observables backward through one step of the stochastic and Gaussian dynamics, respectively.

The operators in~\eqref{eq:P_h_gamma} and~\eqref{eq:P_h_omega} are stochastic Koopman operators~\cite{wanner2022robust} acting on observables over $\calP_2(\calX)$. They are associated with two different discrete-time Markov dynamics on $\calP_2(\calX)$: $P_h^\gamma$ corresponds to the original stochastic update, whereas $P_h^\omega$ corresponds to the moment-matched Gaussian update. Thus, each operator propagates an observable backward through one step of its respective dynamics.


The second ingredient is to formalize the restarting procedure. Let $\rho\in\calP_2(\calX)$ and $0\le k\le n$. We define the restarted trajectories $\rho_j^Z(\rho,k)$ and $\rho_j^X(\rho,k)$, for $j=k,\ldots,n$, as the solutions of \eqref{eq:discr} and \eqref{eq:gauss} but initialized at
\[
\rho_k^Z(\rho,k)=\rho_k^X(\rho,k)=\rho\,.
\]
In this context, the notation $\rho_j^Z(\rho,k)$ or $\rho_j^X(\rho,k)$ represents the $j$-th iteration of the corresponding dynamics when the $k$-th iterate is denoted as $\rho$, with $j \geq k$. With this notation, the original SGD iterates and their Gaussian approximation, which are solutions to~\eqref{eq:discr} and~\eqref{eq:gauss}, are simply
\[
\rho_j^Z(\rho_0,0)
\qquad\text{and}\qquad
\rho_j^X(\rho_0,0),\qquad j = 1,\ldots,n.
\]
respectively. With these notations, define
\begin{equation}\label{eq:observabla_propagation}
\calV_n:=G, \qquad \calV_k:=P_h^\omega\calV_{k+1},
\qquad \text{for}\quad k=n-1,\ldots,0,
\end{equation}
we have
\[
\calV_{n-1}(\rho) = P_h^\omega\calV_{n}(\rho)=\mathbb{E}_{\omega_n}[\calV_n(\rho_n^X(\rho,n-1))]=\mathbb E_{\omega_{n}} \left[ G\bigl(\rho_n^X(\rho,n-1)\bigr) \right]\,,
\]
and by induction,
\begin{equation}\label{eq:propagation_observ_exp}
\calV_k(\rho)= \mathbb E_{\omega_{k+1:n}}
\left[
G\bigl(\rho_n^X(\rho,k)\bigr)
\right].
\end{equation}
This quantity characterizes, if at $k$-th iteration the state is prepared at $\rho$, the average reading of $\rho_n^X$ at $n$-th iteration by $G$. Here we use abbreviated notation $\mathbb E_{\gamma_{a:b}}$ and $\mathbb E_{\omega_{a:b}}$ to denote the expectation over $\gamma_a,\ldots,\gamma_b$ and over $\omega_a,\ldots,\omega_b$, respectively. When $a>b$, the corresponding expectation is omitted. Later we will also use
\[
\mathbb E_{\gamma_{<k}}:=\mathbb E_{\gamma_{1:k-1}},
\qquad
\mathbb E_{\omega_{<k}}:=\mathbb E_{\omega_{1:k-1}}.
\]
When no range is specified, $\mathbb E_\gamma$ and $\mathbb E_\omega$ denote expectation over all random variables relevant to the expression.

\subsection{Main results}
We now present our main result. A list of assumptions are needed.
\begin{assumption}\label{ass:main}
There exist constants
\[
h_0>0,\qquad M_\gamma<\infty,\qquad M_\omega<\infty,
\qquad \Lambda<\infty
\]
such that the following conditions hold for all $0<h\le h_0$.

\begin{enumerate}
\item[(A1)] For every $\rho\in\calP_2(\calX)$, the maps
\[
x\mapsto x-hu_\gamma(x,\rho),
\qquad
x\mapsto x-hu_\omega(x,\rho)=x-hU(x,\rho;\xi)
\]
send $\calX$ into $\calX$, for $\mathbb P$-a.e. $\gamma$ and $\Pi_\rho$-a.e. $\xi$, respectively.
\item[(A2)] For every $\rho\in\calP_2(\calX)$ and every lift $X\in\bH$ so that $X_\#\bP_\theta=\rho$,
\[
\mathbb E_\gamma
\big[
\|u_\gamma(X,\rho)\|_{\bH}^3
\big]
\le M_\gamma,
\quad \text{and}\quad
\mathbb E_\xi
\big[
\|U(X,\rho;\xi)\|_{\bH}^3
\big]
\le M_\omega .
\]


\item[(A3)] For every $G\in\mathscr G^3$,we have that $P_h^\omega G\in\mathscr G^3$ (defined in~\eqref{eq:P_h_omega}), and that
\[
\|P_h^\omega G\|_{\mathscr G^3,\calX}\le (1+\Lambda h)\|G\|_{\mathscr G^3,\calX}\,.
\]
\end{enumerate}
\end{assumption}

\begin{rmk}
A sufficient condition for Assumption~\ref{ass:main}(A2) regarding $M_\omega$ is
\[
\sup_{\rho\in\calP_2(\calX)}
\int_{\calX}
|\bar u_\rho(x)|^3\,d\rho(x)
<\infty,\quad \text{and}\quad 
\sup_{\rho\in\calP_2(\calX)}
\int_{\calX}
\big(\operatorname{tr}K_\rho(x,x)\big)^{3/2}\,d\rho(x)
<\infty .
\]
\end{rmk}

\begin{rmk}
    The following additional conditions on the realizations of the Gaussian field $u_\omega(\cdot, \rho) = U(\cdot, \rho;\xi)$ are sufficient to guarantee that Assumption \ref{ass:main}(A3) is satisfied. 
    \begin{enumerate}
        \item (Differentiablility in the first argument) For almost all $\omega$, and for all fixed $\rho \in \mathcal{P}(\calX)$, the function $x \to u_\omega(x, \rho):\calX \to \calX$ is three times continuously differentiable, with derivatives denoted by $\partial_x^iu_\omega(x, \rho)$.
        \item (Lions differentiability in the second argument) For almost all $\omega$, and for all fixed $x \in \calX$, the function $\rho \to u_\omega(x, \rho):\mathcal{P}(\calX) \to \calX$ is three times differentiable in the Lions sense (see Definition~\ref{def:lions}) with continuous derivatives (in the weak$^*$ topology) denoted by $\partial_\rho u_\omega(x, \rho)$.
        \item (Continuity and boundedness) All mixed derivatives $\partial^i_x\partial^j_\rho u_\omega(x, \rho): \bH^j \times \calX^i \to \calX$, $i + j \le 3$ are jointly continuous in $x$ and $\rho$, and are bounded in operator norm. Denote the upper bounds by:
        $$\begin{cases}
           \sup_{\rho \in \mathcal{P}(\calX), x \in \calX} \Big(| \partial_x u_\omega|_\mathrm{op}  + |\partial_\rho u_\omega|_\mathrm{op} \Big)\le C_1\\
            \sup_{\rho \in \mathcal{P}(\calX), x \in \calX}\Big(| \partial^2_x u_\omega|_\mathrm{op}  +  2|\partial_x\partial_\rho u_\omega |_\mathrm{op}  +  | \partial^2_\rho u_\omega|_\mathrm{op} \Big) \le C_2 \\
             \sup_{\rho \in \mathcal{P}(\calX), x \in \calX}\Big(|  \partial^3_x u_\omega|_\mathrm{op} + 3| \partial^2_x\partial_\rho u_\omega|_\mathrm{op}  + 3 | \partial_x\partial^2_\rho u_\omega|_\mathrm{op}  +  |\partial^3_\rho u_\omega |_\mathrm{op} \Big)\le C_3
        \end{cases}$$
    \end{enumerate}
    Under these additional assumptions, the growth constant $\Lambda $ is upper bounded by:
    \[
        \Lambda \leq 2C_1 + 3C_1C_2 + 3C_2 + C_1^2 + C_3 , \quad \text{ for } h \le 1.
    \]
    
\end{rmk}
\begin{thm}[Main result]\label{thm:main}
Let Assumption~\ref{ass:main} hold. Fix a terminal time $T>0$, let $0<h$ be the step-size, and set $n=\lfloor T/h\rfloor$ to be the iteration number. Then, for every terminal observable $G\in\mathscr G^3$ and every initial measure $\rho_0\in\calP_2(\calX)$,
\[
\left|
\mathbb E_\omega
\big[
G(\rho_n^X(\rho_0,0))
\big]
-
\mathbb E_\gamma
\big[
G(\rho_n^Z(\rho_0,0))
\big]
\right|
\le
C_T h^2\,,
\]
where $\rho_n^X$ and $\rho_n^Z$ are generated by following~\eqref{eq:discr} and~\eqref{eq:gauss} starting from the common initial $\rho_0$. The constant is independent of $h$ and $n$ and takes on a specific form:
\[
C_T =\frac{T}{6}
(M_\omega+M_\gamma)
e^{\Lambda T}
\|G\|_{\mathscr G^3,\calX}\,.
\]
\end{thm}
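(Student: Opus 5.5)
The proof will use a telescoping (Lindeberg-type) decomposition built from the backward observables $\calV_k$ of \eqref{eq:observabla_propagation}. By \eqref{eq:propagation_observ_exp}, $\calV_0(\rho_0)=\mathbb E_\omega[G(\rho_n^X(\rho_0,0))]$ and $\calV_n=G$, so
\[
\mathbb E_\omega\big[G(\rho_n^X)\big]-\mathbb E_\gamma\big[G(\rho_n^Z)\big]
=\sum_{k=0}^{n-1}\mathbb E_{\gamma_{1:k}}\Big[\calV_k(\rho_k^Z)-\calV_{k+1}(\rho_{k+1}^Z)\Big].
\]
Conditioning on $\gamma_{1:k}$ gives $\mathbb E_{\gamma_{k+1}}[\calV_{k+1}(\rho_{k+1}^Z)]=(P_h^\gamma\calV_{k+1})(\rho_k^Z)$. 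Since $\calV_k=P_h^\omega\calV_{k+1}$, each summand becomes $\mathbb E_{\gamma_{1:k}}\big[(P_h^\omega-P_h^\gamma)\calV_{k+1}(\rho_k^Z)\big]$. The global error is therefore a sum of $n$ one-step consistency errors, each evaluated at a (random) state $\rho_k^Z$ in $\calP_2(\calX)$. This realizes Term~III of \eqref{eq:decomposition}; the stability Terms~I and~II are absorbed into the regularity of $\calV_{k+1}$.

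For the one-step estimate, fix $\rho$ and a lift $X$ with $\operatorname{supp}(X)\subset\calX$. Then $(\mathrm{Id}-hu(\cdot,\rho))_\#\rho$ is the law of $X-hu(X,\rho)$, so I apply Theorem~\ref{thm:Taylor} with $H=-h\,u_\gamma(X,\rho)$, respectively $H=-h\,U(X,\rho;\xi)$. By (A1) and convexity of $\calX$, the whole segment $X+\tau H$ stays in $\calX$, so the compact-state seminorm bounds the third derivative along it. The expansion gives $\tilde{\calV}(X)+d_X\tilde{\calV}[H]+\tfrac12 d_X^2\tilde{\calV}[H,H]+R_3$. The zeroth-order terms coincide. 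The first-order terms agree because the means $\bar u_\rho\circ X$ coincide and $d_X\tilde{\calV}$ is a bounded linear functional, which lets me exchange it with the expectation. The second-order terms agree by Proposition~\ref{prop:bilinear_forms_and_covariance} applied with $B=d_X^2\tilde{\calV}$ to the $\bH$-valued variables $u_\gamma(X,\rho)$ and $U(X,\rho;\xi)$. Here I must check that the covariances in the sense of that proposition match. For $h_i\in\bH$, $\mathbb E\langle u_\gamma(X)-\bar u(X),h_1\rangle\langle\cdot,h_2\rangle=\iint h_1(\theta)^\top K_\rho(X(\theta),X(\theta'))h_2(\theta')\,d\bP_\theta d\bP_\theta$, and the Gaussian field has the same kernel by construction, so the same identity holds for $U$. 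Fubini applies by square integrability, which follows from (A2). Only the remainders survive, and by \eqref{eq:taylor_remainder_bound} and (A2),
\[
\big|(P_h^\omega-P_h^\gamma)\calV(\rho)\big|\le \frac{h^3}{6}\,\|\calV\|_{\mathscr G^3,\calX}\,(M_\omega+M_\gamma).
\]

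For the accumulation, (A3) iterated gives $\|\calV_{k+1}\|_{\mathscr G^3,\calX}\le(1+\Lambda h)^{n-k-1}\|G\|_{\mathscr G^3,\calX}\le e^{\Lambda T}\|G\|_{\mathscr G^3,\calX}$, using $nh\le T$. Summing $n\le T/h$ local errors of size $\tfrac{h^3}{6}(M_\omega+M_\gamma)e^{\Lambda T}\|G\|$ yields exactly $C_Th^2$. Two details need checking along the way: the states $\rho_k^Z$ stay supported in $\calX$ by (A1), so the compact-state seminorm applies; and the measurability of $\calV_k$ needed for the conditioning step follows from its continuity, since $\calV_k\in\mathscr G^3$.

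I expect the main obstacle to be the rigorous justification of the moment matching in the second-order term. This means showing that the $\bH$-valued random variables built from fields evaluated along the lift $X$ have the required mean and covariance in $\bH$ (a measurability and Fubini issue for the Karhunen--Lo\`eve realization), and that the operator norm of $d^3$ is controlled along the segment using only the compact-support seminorm. The latter is where (A1) and the convexity of $\calX$ enter. The telescoping and the Gr\"onwall-type bound are routine once (A3) is granted.
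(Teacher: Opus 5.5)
Your proposal is correct and follows the paper's proof essentially step for step. It uses the same telescoping over the backward observables $\calV_k$ into one-step errors $(P_h^\omega-P_h^\gamma)\calV_k$, the same Lions--Taylor expansion with first- and second-order cancellation by mean matching and Proposition~\ref{prop:bilinear_forms_and_covariance}, the same cubic remainder bound via (A1), convexity of $\calX$ and (A2), and the same accumulation via (A3) with $nh\le T$; your added checks (that equal kernels $K_\rho$ give equal $\bH$-valued covariances for $u_\gamma(X,\rho)$ and $U(X,\rho;\xi)$, and that the first-order cancellation comes from mean matching rather than from (A3) as the paper's text says) fill in details the paper leaves implicit.
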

Despite the new geometry and the highly technical use of Lions lifting and differentiability, the result is very intuitive. The matching of the two velocity fields is secured at the level of the mean and the covariance, suggesting the leading-order error should be of order $h^2$. Furthermore, the procedure does not distinguish dissipative or contracting dynamics, so exponential growth in time is also expected. However, gradient flow naturally descends along gradients, so the system is expected to contract, and one may potentially turn the exponential growth into decay. This has not been pursued in this paper.

\begin{figure}
    \centering

\begin{tikzpicture}[x=1cm,y=1cm,>=stealth]

\definecolor{pastblue}{RGB}{74,144,226}
\definecolor{sgdgreen}{RGB}{65,117,5}
\definecolor{gaussred}{RGB}{208,2,27}

\coordinate (A)  at (0,0);
\coordinate (Gk) at (2.8,1.05);
\coordinate (Wk) at (2.8,-1.05);
\coordinate (Gnhi) at (7.4,1.30);
\coordinate (Gn)   at (7.4,0.98);
\coordinate (Gnlo) at (7.4,0.64);
\coordinate (Wnhi) at (7.4,-0.64);
\coordinate (Wn)   at (7.4,-0.98);
\coordinate (Wnlo) at (7.4,-1.30);

\filldraw[fill=sgdgreen!10,draw=sgdgreen!45]
    (Gn) ellipse (0.14 and 0.53);
\filldraw[fill=gaussred!10,draw=gaussred!45]
    (Wn) ellipse (0.14 and 0.53);

\draw[densely dotted] (0,-1.85) -- (0,1.85);
\draw[densely dotted] (2.8,-1.85) -- (2.8,1.85);
\draw[densely dotted] (7.4,-1.85) -- (7.4,1.85);

\draw[-{Latex[length=2mm, width=2mm]},sgdgreen]
    (A) -- (Gk)
    node[midway,above left] {\small SGD step};

\draw[-{Latex[length=2mm, width=2mm]},gaussred]
    (A) -- (Wk)
    node[midway,below left] {\small Gaussian step};

\draw[-{Latex[length=2mm, width=2mm]},sgdgreen,dashed]
    (Gk) .. controls (4.0,1.65) and (5.8,1.15) .. (Gn);

\draw[-{Latex[length=2mm, width=2mm]},gaussred,dashed]
    (Wk) .. controls (4.0,-1.65) and (5.8,-1.15) .. (Wn);

\draw[sgdgreen!45,dashed]
    (Gk) .. controls (4.0,0.55) and (5.9,0.50) .. (Gnlo);
\draw[sgdgreen!45,dashed]
    (Gk) .. controls (4.2,1.35) and (5.6,1.48) .. (Gnhi);

\draw[gaussred!45,dashed]
    (Wk) .. controls (4.0,-0.55) and (5.9,-0.50) .. (Wnhi);
\draw[gaussred!45,dashed]
    (Wk) .. controls (4.2,-1.35) and (5.6,-1.48) .. (Wnlo);

\fill[pastblue] (A) circle (2.4pt);
\fill[sgdgreen] (Gk) circle (2.4pt);
\fill[gaussred] (Wk) circle (2.4pt);

\fill[sgdgreen!30] (7.4,0.50) circle (1.0pt);
\fill[sgdgreen!50] (7.4,0.64) circle (1.3pt);
\fill[sgdgreen!75] (7.4,0.80) circle (1.7pt);
\fill[sgdgreen]    (Gn)       circle (2.4pt);
\fill[sgdgreen!75] (7.4,1.15) circle (1.7pt);
\fill[sgdgreen!50] (7.4,1.30) circle (1.3pt);
\fill[sgdgreen!30] (7.4,1.46) circle (1.0pt);

\fill[gaussred!30] (7.4,-0.50) circle (1.0pt);
\fill[gaussred!50] (7.4,-0.64) circle (1.3pt);
\fill[gaussred!75] (7.4,-0.80) circle (1.7pt);
\fill[gaussred]    (Wn)        circle (2.4pt);
\fill[gaussred!75] (7.4,-1.15) circle (1.7pt);
\fill[gaussred!50] (7.4,-1.30) circle (1.3pt);
\fill[gaussred!30] (7.4,-1.46) circle (1.0pt);

\node[below] at (0,-1.85) {$k-1$};
\node[below] at (2.8,-1.85) {$k$};
\node[below] at (7.4,-1.85) {$n$};

\node[left,pastblue] at (A) {$\rho_{k-1}^Z$};

\node[above,sgdgreen] at (Gk) {$\rho_k^\gamma$};
\node[below,gaussred] at (Wk) {$\rho_k^\omega$};

\node[right,sgdgreen] at (7.62,0.98)
    {\small $\operatorname{Law}_\omega[\rho_n^X\mid\rho_k^\gamma]$};

\node[right,gaussred] at (7.62,-0.98)
    {\small $\operatorname{Law}_\omega[\rho_n^X\mid\rho_k^\omega]$};

\node[sgdgreen,align=center] at (5.1,1.75)
    {\small representative future\\[-0.2em]\small Gaussian sample paths};

\node[gaussred,align=center] at (5.1,-1.75)
    {\small representative future\\[-0.2em]\small Gaussian sample paths};

\end{tikzpicture}
\caption{
Schematic representation of one summand in the telescoping identity \eqref{eq:last}. Conditioned on the past $\gamma_{<k}$, both branches start from the same measure $\rho_{k-1}^Z$. The red branch performs one Gaussian update, $\rho_k^\omega=\bigl(\mathrm{Id}-hU(\cdot,\rho_{k-1}^Z;\xi_k)\bigr)_\#\rho_{k-1}^Z$, whereas the green branch performs one SGD update, $\rho_k^\gamma = \bigl(\mathrm{Id}-hu_{\gamma_k}(\cdot,\rho_{k-1}^Z)\bigr)_\#\rho_{k-1}^Z$. From time $k$ to time $n$, both branches are evaluated by the same observable $\calV_k$, equivalently by averaging $G$ over all future Gaussian sample paths. The dashed curves show representative paths, and the endpoint clouds in the $n$-column depict their conditional terminal laws.
}
\label{fig:summands}
\end{figure}

\subsection{Proof of the main result}





\begin{proof}[Proof of Theorem~\ref{thm:main}]

By Assumption~\ref{ass:main}(A3),
\[
\|\calV_k\|_{\mathscr G^3,\calX}
\le
(1+\Lambda h)^{n-k}\|G\|_{\mathscr G^3,\calX}
\le
e^{\Lambda T}\|G\|_{\mathscr G^3,\calX},
\qquad 0\le k\le n.
\]
Set $G_T:=e^{\Lambda T}\|G\|_{\mathscr G^3,\calX}$. Thus, $\|\calV_k\|_{\mathscr G^3,\calX}\le G_T$, for $0\le k\le n$.

Note that, according to~\eqref{eq:propagation_observ_exp}, $\mathbb E_\omega \bigl[ G(\rho_n^X(\rho_0,0)) \bigr]= \calV_0(\rho_0)$, so by telescoping:
\begin{equation}\label{eq:last}
\begin{aligned}
&\mathbb E_\omega\bigl[ G(\rho_n^X(\rho_0,0)) \bigr] - \mathbb E_\gamma \bigl[G(\rho_n^Z(\rho_0,0))\bigr]\\
&\qquad= \calV_0(\rho_0) - \mathbb E_\gamma \bigl[\calV_n(\rho_n^Z) \bigr]\\
&\qquad=\sum_{k=1}^n\mathbb E_{\gamma_{<k}} \left[\calV_{k-1}(\rho_{k-1}^Z) -\mathbb E_{\gamma_k}\bigl[\calV_k(\rho_k^Z)\,\big|\,\rho_{k-1}^Z\bigr]\right]\\
&\qquad=\sum_{k=1}^n
\mathbb E_{\gamma_{<k}}
\left[
(P_h^\omega\calV_k)(\rho_{k-1}^Z)
-
(P_h^\gamma\calV_k)(\rho_{k-1}^Z)
\right],
\end{aligned}
\end{equation}
where we used the definition of $\calV_k$ to replace the first term and $P_h^\gamma$ in~\eqref{eq:P_h_gamma} to replace the second term. 
Figure~\ref{fig:summands} illustrates the two one-step updates appearing in one summand of \eqref{eq:last}. The future dashed paths should be read as representative Gaussian sample paths inside the expectation defining $\calV_k$.

Equation~\eqref{eq:P_h_gamma} nicely translates the total trajectory disparity into many one-step error estimates. We now need to show every single term in the summation in the last row of~\eqref{eq:last} is small. Fix $k\in\{1,\ldots,n\}$ and fix $\rho\in\calP_2(\calX)$. Let $X\in\bH$ be a lift of $\rho$. Let $\widetilde{\calV}_k$ denote a law-invariant lift of $\calV_k$. Then:
\[
(P_h^\omega\calV_k)(\rho)
=
\mathbb E_\omega
\left[
\widetilde{\calV}_k(X-hu_\omega(X,\rho))
\right]\,,\quad (P_h^\gamma\calV_k)(\rho)
=
\mathbb E_\gamma
\left[
\widetilde{\calV}_k(X-hu_\gamma(X,\rho))
\right]\,.
\]
Using the Taylor expansion derived in the Lions sense from Theorem~\ref{thm:Taylor},
\[
\widetilde{\calV}_k(X-hH)
=
\widetilde{\calV}_k(X)
-
h\,d_X\widetilde{\calV}_k[H]
+
\frac{h^2}{2}\,d_X^2\widetilde{\calV}_k[H,H]
+
R_k(X,H),
\]
where
\[
R_k(X,H)
=
\int_0^1
\frac{(1-\tau)^2}{2}
d^3_{X-\tau hH}\widetilde{\calV}_k[-hH,-hH,-hH]\,d\tau .
\]
Therefore
\[
(P_h^\omega\calV_k)(\rho)
-
(P_h^\gamma\calV_k)(\rho)
=
\mathrm{I}_k(\rho)+\mathrm{II}_k(\rho)+\mathrm{III}_k(\rho),
\]
where the three terms are:
\begin{itemize}
    \item Leading order disparity:
    \[
\mathrm{I}_k(\rho)
=
-h\,d_X\widetilde{\calV}_k
\left[
\mathbb E_\omega u_\omega(X,\rho)-\mathbb E_\gamma u_\gamma(X,\rho)
\right]=0\,,
\]
according to Assumption~\ref{ass:main}(A3), $\mathbb E_\omega H_\omega = \mathbb E_\gamma H_\gamma = \bar u_\rho\circ X$.
\item Second order disparity:
\[
\mathrm{II}_k(\rho)= \frac{h^2}{2}\left(\mathbb E_\omega \left[ d_X^2\widetilde{\calV}_k[u_\omega,u_\omega]\right] -\mathbb E_\gamma \left[ d_X^2\widetilde{\calV}_k[u_\gamma,u_\gamma]\right]
\right)=0\,,
\]
and this comes from Assumption~\ref{ass:main}(A3) and the application of Proposition~\ref{prop:bilinear_forms_and_covariance} on bilinear operator.
\item Third order disparity is:
\[
\mathrm{III}_k(\rho)
=
\mathbb E_\omega R_k(X,u_\omega)
-
\mathbb E_\gamma R_k(X,u_\gamma).
\]
By Assumption~\ref{ass:main}(A1), the points $X-hH_\omega$ and $X-hH_\gamma$ belong to $\bH$ almost surely. Since $\calX$ is convex, the whole segments $X-\tau hu_\omega$ and $X-\tau hu_\gamma$ are in $\bH$ for all $0\le\tau\le1$. Hence
\[
\max\left\{\|d^3_{X-\tau hu_\omega}\widetilde{\calV}_k\|_{\mathrm{op}}\,,\|d^3_{X-\tau hu_\gamma}\widetilde{\calV}_k\|_{\mathrm{op}}\right\}
\le \|\calV_k\|_{\mathscr G^3,\calX}\le G_T,
\]
and similarly for $H_\gamma$.
\end{itemize}
Putting these all together, and use Assumption~\ref{ass:main}(A2), we obtain, for every $\rho\in\calP_2(\calX)$,
\begin{equation}\label{eq:local_weak_error}
\left|
(P_h^\omega\calV_k)(\rho)
-
(P_h^\gamma\calV_k)(\rho)
\right|
\le
\frac{h^3}{6}
G_T
(M_\omega+M_\gamma).
\end{equation}

Combining \eqref{eq:last} and \eqref{eq:local_weak_error}, we get
\[
\begin{aligned}
&
\left|
\mathbb E_\omega
\bigl[
G(\rho_n^X(\rho_0,0))
\bigr]
-
\mathbb E_\gamma
\bigl[
G(\rho_n^Z(\rho_0,0))
\bigr]
\right|
\\
&\qquad
\le
\sum_{k=1}^n
\mathbb E_{\gamma_{<k}}
\left[
\left|
(P_h^\omega\calV_k)(\rho_{k-1}^Z)
-
(P_h^\gamma\calV_k)(\rho_{k-1}^Z)
\right|
\right]
\\
&\qquad
\le
n\,\frac{h^3}{6}G_T(M_\omega+M_\gamma).
\end{aligned}
\]
Since $nh\le T$, this gives
\[
\left|
\mathbb E_\omega
\bigl[
G(\rho_n^X(\rho_0,0))
\bigr]
-
\mathbb E_\gamma
\bigl[
G(\rho_n^Z(\rho_0,0))
\bigr]
\right|
\le
\frac{T}{6}
(M_\omega+M_\gamma)
e^{\Lambda T}
\|G\|_{\mathscr G^3,\calX}
h^2.
\]
This proves the theorem.
\end{proof}

\section{Numerical experiments}\label{sec:numerics}

We present two complementary experiments for the stochastic iteration \eqref{eq:discr} and its moment-matched Gaussian approximation \eqref{eq:gauss}. The first is an exactly solvable benchmark that isolates the second-order weak error. The second is a stochastic inverse problem in which a probability distribution is reconstructed from randomly sampled moment measurements; in this case the velocity and its covariance depend on the current measure.

For a terminal observable $\mathcal G$ and $n=T/h$, we report
\begin{equation}\label{eq:numerical_weak_error}
    e_h(\mathcal G,T)
    :=\left|
    \mathbb E_\gamma[\mathcal G(\rho_n^Z)]
    -\mathbb E_\omega[\mathcal G(\rho_n^X)]
    \right|.
\end{equation}
All computations use a Euclidean projection $\pi_{\mathcal X}$ after each explicit update as a compact-domain safeguard. Projection activations are recorded in every Monte Carlo experiment; none occurred in the results below.

\subsection{Solvable quadratic benchmark}\label{subsec:numerical_quadratic}

Let $\mathcal X=[-10,10]$ and consider
\begin{equation}\label{eq:numerical_energy}
    \mathcal E_\gamma(\rho)
    :=\frac12\int_{\mathcal X}(x-\gamma)^2\,d\rho(x),
    \qquad
    u_\gamma(x,\rho)=x-\gamma.
\end{equation}
We use the skew two-point distribution
\begin{equation}\label{eq:numerical_noise}
    \mathbb P(\gamma=2)=\frac13,
    \qquad
    \mathbb P(\gamma=-1)=\frac23.
\end{equation}
It satisfies $\mathbb E\left[\gamma\right]=0$, $\mathbb E\left[\gamma^2\right]=2$, and $\mathbb E\left[\gamma^3\right]=2$. The moment-matched Gaussian parameter is therefore $\omega\sim\mathcal N(0,2)$, with velocity $u_\omega(x,\rho)=x-\omega$.

Writing
\[
    \pi_{\mathcal X}(x):=\min\{10,\max\{-10,x\}\},
\]
the implemented recursions are
\begin{equation}\label{eq:numerical_updates}
\begin{aligned}
    \rho_{k+1}^Z
    &=\bigl(\pi_{\mathcal X}\circ((1-h)\operatorname{Id}+h\gamma_k)\bigr)_\#
      \rho_k^Z,\\
    \rho_{k+1}^X
    &=\bigl(\pi_{\mathcal X}\circ((1-h)\operatorname{Id}+h\omega_k)\bigr)_\#
      \rho_k^X.
\end{aligned}
\end{equation}

We initialize at $\rho_0=\delta_0$ and take
\begin{equation}\label{eq:numerical_observable}
    \mathcal G(\rho):=\int_{\mathcal X}\sin(x)\,d\rho(x).
\end{equation}
For $0<h\leq1$, the original process never reaches the boundary because each update is a convex combination of the current state and $\gamma_k\in\{-1,2\}$. Consequently,
\[
    x_n^Z=h\sum_{j=1}^n(1-h)^{n-j}\gamma_j,
    \qquad \rho_n^Z=\delta_{x_n^Z},
\]
and independence gives
\begin{equation}\label{eq:numerical_exact_error}
    \mathbb E_\gamma[\mathcal G(\rho_n^Z)]
    =\operatorname{Im}
    \prod_{r=0}^{n-1}
    \left(
        \frac13e^{2ih(1-h)^r}
        +\frac23e^{-ih(1-h)^r}
    \right).
\end{equation}
The projected Gaussian recursion is symmetric about zero, so $\mathbb E_\omega[\mathcal G(\rho_n^X)]=0$. A cumulant expansion of \eqref{eq:numerical_exact_error}, with $T=nh$ fixed, yields
\begin{equation}\label{eq:numerical_asymptotic_error}
    e_h(\mathcal G,T)
    =C_*h^2+\mathcal O(h^3),
    \qquad
    C_*:=\frac{1-e^{-3T}}9.
\end{equation}
The nonzero coefficient is generated by the third-moment mismatch between the skew two-point noise and its Gaussian surrogate.

Figure~\ref{fig:compact_weak_convergence} directly compares the exact weak error at $T=1$ with numerical estimates obtained from simulated trajectories for $h=2^{-\ell}$, $\ell=2,\ldots,5$. At each reported step size, we simulate $10^6$ trajectories of each dynamics, organized into $5\times10^5$ independent pairs. The error bars are $95\%$ confidence intervals for the absolute weak error, and the exact value lies inside the interval at every reported resolution. No projection $\pi_{\mathcal X}$ was activated. 

This benchmark demosntrates the result in Theorem~\ref{thm:main}. Matching the mean and covariance removes the first- and second-order contributions in the one-step Taylor expansion, while the third-moment mismatch produces a local error of order $h^3$; over $n=T/h$ steps this accumulates to the global order $h^2$ predicted by the theorem. Because $C_*\neq0$, the exact expansion \eqref{eq:numerical_asymptotic_error} also shows that the second-order bound is sharp in general. 

\begin{figure}[tbp]
    \centering
    \includegraphics[width=0.82\linewidth]
        {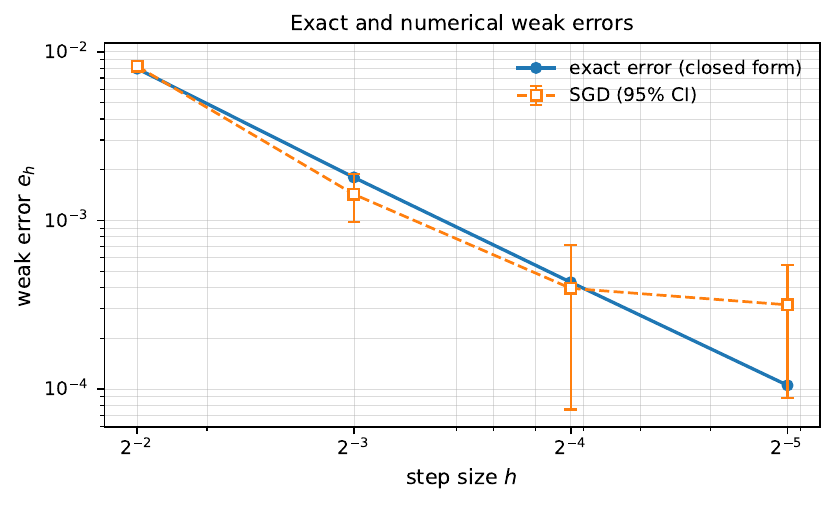}
    \caption{Solvable quadratic benchmark at $T=1$. The solid curve shows the exact weak error computed from the closed-form expectations, while the dashed squares show estimates obtained from simulated trajectories. Each numerical estimate uses $10^6$ trajectories for each dynamics, arranged into $5\times10^5$ independent pairs; the bars indicate $95\%$ confidence intervals.}
    \label{fig:compact_weak_convergence}
\end{figure}

\subsection{Random-moment distribution reconstruction}
\label{subsec:numerical_moments}

We next study an optimization problem that is to reconstruct a distribution that matches desired first two moments:
\begin{equation}\label{eq:moment_expected_objective}
    \calE(\rho)
    =\frac{p}{2}(m_\rho-y_1)^2
    +\frac{1-p}{2}(q_\rho-y_2)^2 = \E_{\gamma \sim \text{Bernouli}(p)} [\calE_{\gamma + 1}(\rho)],
\end{equation}
where $\gamma \in \{0, 1\}$ is a Bernouli random variable with parameter $p$, and 
\begin{equation}\label{eq:moment_sample_losses}
    \mathcal E_1(\rho):=\frac12(m_\rho-y_1)^2,
    \qquad
    \mathcal E_2(\rho):=\frac12(q_\rho-y_2)^2\,.
\end{equation}
Here $m_\rho$ and $q_\rho$ are the first two moments:
\[
    m_\rho:=\int_{\mathcal X}x\,d\rho(x),
    \qquad
    q_\rho:=\int_{\mathcal X}x^2\,d\rho(x)\,.
\]
Numerically we set the targets $y_1=0.75$ and $y_2=1$ and $p=1/3$. Conditioned on $\rho$, one can compute the mean function and covariance kernel: 
\begin{equation}\label{eq:moment_mean_covariance}
\begin{aligned}
    \bar u_\rho(x)
    &=pu_1(x,\rho)+(1-p)u_2(x,\rho),\\
    K_\rho(x,y)
    &=p(1-p)
      \bigl(u_1(x,\rho)-u_2(x,\rho)\bigr)
      \bigl(u_1(y,\rho)-u_2(y,\rho)\bigr),
\end{aligned}
\end{equation}
where the two velocities are
\begin{equation}\label{eq:moment_sample_velocities}
    u_1(x,\rho)=m_\rho-y_1,
    \qquad
    u_2(x,\rho)=2(q_\rho-y_2)x.
\end{equation}
Since the covariance matrix is rank one, the matching Gaussian field is represented by a single Gaussian variable:
\begin{equation}\label{eq:moment_gaussian_field}
    U(x,\rho;\xi)
    =\bar u_\rho(x)
    +\sqrt{p(1-p)}
    \bigl(u_1(x,\rho)-u_2(x,\rho)\bigr)\xi,
    \qquad \xi\sim\mathcal N(0,1)\,.
\end{equation}

We take $\mathcal X=[-20,20]$ and $\rho_0=\operatorname{Unif}[-1,1]$. The projection $\pi_{\mathcal X}$ is applied after each update. Both velocities in \eqref{eq:moment_sample_velocities} are affine in $x$; as long as the boundary is inactive, each realization is therefore an affine image of the initial uniform distribution. We propagate this affine representation and evaluate its moments analytically. 
The terminal observable is
\begin{equation}\label{eq:moment_terminal_observable}
    \mathcal G(\rho):=m_\rho^3.
\end{equation}
This cubic statistic is smooth and is sensitive to the first unmatched moment. For each step size $h$, we estimate the difference between two dynamics with $2\times10^6$ coupled paths. 

\begin{figure}[tbp]
    \centering
    \includegraphics[width=0.94\linewidth]
        {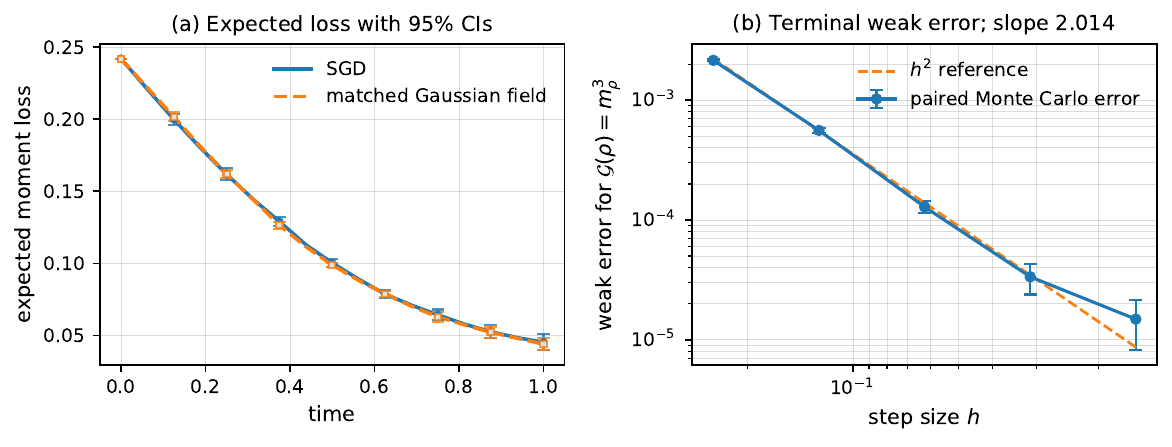}
    \caption{Random-moment distribution reconstruction.
    \textbf{(a)} Expected objective \eqref{eq:moment_expected_objective} for the SGD iteration and its matched Gaussian approximation at $h=2^{-4}$, estimated from $25$ independently sampled trajectories for each dynamics; markers and whiskers show pointwise $95\%$ confidence intervals at selected times. \textbf{(b)} Paired Monte Carlo estimate of the terminal weak error for $\mathcal G(\rho)=m_\rho^3$, with $95\%$ confidence intervals and a second-order reference line. The fitted order is $2.014$.}
\label{fig:moment_reconstruction}
\end{figure}

The results are presented in Figure~\ref{fig:moment_reconstruction}. The left panel of Figure~\ref{fig:moment_reconstruction} shows $\mathbb E[\mathcal E(\rho_t)]$ for $h=2^{-4}$, estimated from $25$ independently sampled trajectories for each dynamics. Pointwise $95\%$ Student-$t$ confidence intervals are shown at nine equally spaced times.  The right panel reports the terminal weak error at $T=1$ for $h=2^{-\ell}$, $\ell=2,\ldots,6$, using $2\times10^6$ paths per step size. A log-log fit over the four points with signal-to-noise ratio greater than five gives order $2.014$. The smallest-step result is shown with its $95\%$ confidence interval but is excluded from the fit because the Monte Carlo error dominates. No projection activation occurred in any of the reported trajectories.

This example tests the same conclusion in the state-dependent setting for which the analysis was developed. Equations~\eqref{eq:moment_mean_covariance} and~\eqref{eq:moment_gaussian_field} match the conditional mean and covariance of the velocity field at every current measure. The smooth cubic observable \eqref{eq:moment_terminal_observable} detects the remaining higher-moment discrepancy, and the fitted order $2.014$ is consistent with the $\mathcal O(h^2)$ estimate of Theorem~\ref{thm:main}. The overlapping objective curves further indicate that the Gaussian approximation dynamics reproduces the stochastic gradient dynamics.


\section{Conclusion}\label{sec:conclusion}

We have established a second-order weak approximation of stochastic Wasserstein gradient iterations by a discrete-time scheme driven by a state-dependent Gaussian velocity field. A Gaussian field that matches the conditional mean and covariance of the original stochastic velocity is capable of capturing the stochastic Wasserstein gradient dynamics with weak error of $h^2$ for sufficiently regular terminal observables, as discovered by Theorem~\ref{thm:main}.


The proof also identifies the mechanism behind this rate. A third-order Taylor expansion in the Lions sense reduces the one-step comparison to the first moments, covariance terms, and a cubic remainder. Moment matching makes the first- and second-order contributions agree, leaving a local discrepancy of order $h^3$. The telescoping weak-error representation and the stability of the Gaussian transition operators then accumulate these local errors over $T/h$ steps to produce the global second-order estimate. Thus, the Gaussian approximation retains precisely the fluctuation information needed at the order considered here.

The numerical experiments support and complement this analysis. In the solvable quadratic benchmark, the exact nonzero $h^2$ coefficient shows that the theoretical order is sharp in general. The random-moment reconstruction problem demonstrates the same behavior when both the velocity and its covariance depend on the evolving measure. In both cases, the compact-domain projection remained inactive, and the measured rates therefore reflect the interior moment-matching mechanism analyzed in the paper.

Several questions remain open. The present result relies on a compact-domain invariance condition, uniform third-moment bounds, and regularity propagation for the Gaussian transition operators. It would be useful to replace these assumptions with verifiable conditions for broader classes of energies, to treat unbounded domains and regimes in which a boundary mechanism is active, and to study the continuous-time stochastic Wasserstein dynamics suggested by the Gaussian approximation. Another direction is to investigate whether matching additional moments can yield higher-order weak schemes. These extensions would further clarify the role of Gaussian surrogates in stochastic optimization over probability measures. Lastly, we compare on the discrete-in-time level. Drawing inspiration from~\cite {li2019stochastic}, it is possible to pass the $h\to0$ limit to devise a modified stochastic differential PDE. However, this will necessarily trigger the study of the well-posedness of SPDEs over $\mathcal P_2(\mathcal X)$ when the velocity field is a Gaussian field that nonlinearly depends on the probability state. It is not yet evident.

\bibliographystyle{alpha}
\bibliography{sample}

\end{document}